\theoremstyle{definition}
\newtheorem{theorem}{Theorem}
\newcommand{\tabincell}[2]{\begin{tabular}{@{}#1@{}}#2\end{tabular}}  
\DeclareMathOperator*{\argmax}{argmax}
\begin{document}
	\begin{CJK}{UTF8}{gkai}
		% The file aaai.sty is the style file for AAAI Press 
		% proceedings, working notes, and technical reports.
		%
		\title{A Multi-Agent Reinforcement Learning Method for Impression Allocation \\in Online Display Advertising}
		
		%\iffalse
		\author{Di Wu, Cheng Chen, Xun Yang, Xiujun Chen, Qing Tan, Jian Xu, Kun Gai\\
			Alibaba Group\\
			Beijing, P.R.China\\
\{di.wudi,chencheng.cc,vincent.yx,xiujun.cxj,qing.tan,xiyu.xj,jingshi.gk\}@alibaba-inc.com\\
		}
		%\fi
		
		\maketitle
		\begin{abstract}
			\begin{quote}
				 In online display advertising, guaranteed contracts and real-time bidding (RTB) are two major ways to sell impressions for a publisher. Despite the increasing popularity of RTB, there is still half of online display advertising revenue generated from guaranteed contracts. Therefore, simultaneously selling impressions through both guaranteed contracts and RTB is a straightforward choice for a publisher to maximize its \emph{yield}. However, deriving the optimal strategy to allocate impressions is not a trivial task, especially when the environment is unstable in real-world applications. In this paper, we formulate the impression allocation problem as an auction problem where each contract can submit virtual bids for individual impressions. With this formulation, we derive the optimal impression allocation strategy by solving the optimal bidding functions for contracts. Since the bids from contracts are decided by the publisher, we propose a multi-agent reinforcement learning (MARL) approach to derive cooperative policies for the publisher to maximize its \emph{yield} in an unstable environment. The proposed approach also resolves the common challenges in MARL such as input dimension explosion, reward credit assignment, and non-stationary environment. Experimental evaluations on large-scale real datasets demonstrate the effectiveness of our approach.
			\end{quote}
		\end{abstract}
		
		\section{Introduction}
		
		In recent years, online display advertising has become one of the most influential businesses with \$39.4 billion revenue\footnote{Display-related ad formats include: Banner and Video.} for FY 2017 in US alone \citep{iab-hy-2017}. 
		As shown in Fig. \ref{fg:displayads}, typically when a user visits a publisher, e.g., a news website, there would be one or more ad impression opportunities generated in real time. Advertisers are able to acquire these opportunities to display their ads at certain costs and these costs eventually become the revenue of the publisher.
		
		For a publisher, there are two major ways to sell impressions in the field of display advertising. The first one is through \emph{guaranteed contracts} (also referred as \emph{guaranteed delivery} \citep{chen2014dynamic}). A guaranteed contract is an agreement between an advertiser and a publisher by negotiating directly or by going through a programmatic guaranteed mechanism \citep{chencombining}. The contract usually specifies the contract payment amount, the campaign duration and the desired number of ad impressions. The advertiser typically makes the payment before the ad delivery starts and the publisher guarantees the desired number of ad impressions. The publisher is also responsible for any shortfall in the number of impressions delivered. A penalty is usually incurred based on the volume of under-delivery.
		
		The second way to sell impressions is through \emph{real-time bidding} (RTB). RTB allows advertisers to bid in real-time for impressions and does not guarantee the impression volume for any advertiser \citep{yuan2013real}. For each impression opportunity, the advertiser offering the highest bid wins the opportunity to display its ad. The cost of the winner is determined based on the auction mechanism. In this paper, without loss of generality, we focus our discussion under the second price auction \citep{edelman2007internet} where the winning advertiser is charged the second highest bid in the auction. Our approach is also applicable under other auction mechanisms such as Vickrey–Clarke–Groves (VCG) \citep{nisan2007algorithmic}.
		
		\begin{figure} 
			\centering
			\includegraphics[width=0.3\textwidth]{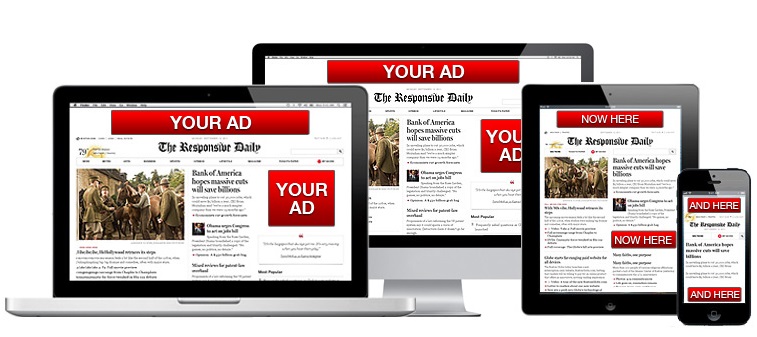}
			\caption{An example of online display ads. Advertisers can show their ads on publishers' websites and/or apps to the targeted audience with certain cost (revenue to the publisher).}
			\label{fg:displayads}
		\end{figure}
		
		Despite the increasing popularity of RTB, there is still half of the online display advertising revenue generated from guaranteed contracts according to \citep{FisherPG}. For a publisher, simultaneously selling impressions through both guaranteed contracts and RTB is a straightforward choice in terms of maximizing its \emph{yield}. The challenge turns out to be how the impressions should be allocated when guaranteed contracts and RTB are both acquiring impressions. In other words, we are interested in deriving the optimal impression allocation strategy for the publisher. Generally speaking, there are three main considerations when a publisher strategically allocates its impressions in this scenario.
% 		\begin{itemize}
% 			\item \textbf{Maximizing total payment from advertisers}: The total payment consists of the payments from both guaranteed contracts and RTB.
% 			\item \textbf{Fulfilling guaranteed contracts}: Simply maximizing total revenue might violate contracts and result in corresponding shortfall penalty.
% 			\item \textbf{Ensuring impression quality for guaranteed contracts}: Advertisers with guaranteed contracts are more and more concerned with the impression quality. Quality of the delivered impressions is crucial to ensure the long-term value of these advertisers.
% 		\end{itemize}
		
    	\begin{itemize}
          \item \textbf{Maximizing total revenue}: The revenue is associated with guaranteed contracts, the revenue from RTB, and the potential contract violation penalties.
        %   \item \textbf{Fulfilling guaranteed contracts}: Simply maximizing total revenue might be problematic since guaranteed contracts can be violated, which may be harmful for the credit of the publisher.
          \item \textbf{Fulfilling guaranteed contracts}: Under-delivery is problematic since it may be harmful for the credit of the publisher. It is common to negotiate proper violation penalties to resolve this challenge.
          \item \textbf{Ensuring impression quality for guaranteed contracts}: Advertisers with guaranteed contracts are more and more concerned with the impression quality. Quality of the delivered impressions is crucial to ensure the long-term value of these advertisers.
        \end{itemize}
		
		The above considerations are common in real-world scenarios. Therefore, deriving the optimal impression allocation strategy is not a trivial task. What makes it even more challenging is the instability of the environment. When deriving the optimal impression allocation strategy, the stationary assumptions presented in the work by \citep{li2016optimal,jauvion2018optimal,balseiro2014yield} are difficult to satisfy in the real-world scenarios. First, the environment stability is vulnerable to unexpected traffic changes such as those brought by sales events on holidays. Second, usually concurrent with the traffic changes, the market price distribution of the impressions can also deviate from the empirical. Finally, the unpredictable advertiser behaviors in RTB including modifying budget, bid, and targeted audience can make the environment more complicated and dynamic \citep{wu2018budget}.
		
		To derive an optimal impression allocation strategy that takes into account the above-mentioned considerations and challenges, we propose to analyze the problem from a novel perspective. Since the allocation is non-trivial and the environment is highly dynamic, can the guaranteed contracts also participate in the real-time auctions so that they can also enjoy the liquidity and the impressions can be fully auctioned?  More specifically, can each guaranteed contract be treated as a bidding agent which is able to submit bids for individual impressions and the impression allocation is based on the submitted bids from both guaranteed contracts and RTB? We will show that such a setup can actually lead us to the optimal impression allocation strategy. And the optimal strategy also possesses the appealing property that the optimal bid from a guaranteed contract only depends on the impression quality and the under-delivery penalty, regardless of the RTB status. 
		
		It is worth noting that the bidding agents representing the guaranteed contracts are essentially different from those representing RTB ads. Their submitted bids are actually decided by the publisher. In other words, they are able to cooperate with each other to achieve the optimum and this is one of the key ideas in this paper. Since they submit bids in real-time to participate in auctions in an unstable environment, it is natural to employ the \emph{multi-agent reinforcement learning} (MARL) approach to learn the optimal strategy. We follow the work in \citep{lowe2017multi} and propose the \textit{multi-agent policy optimization with local observations} (MAPOLO) approach to boost the convergence by shaping the reward function \citep{wu2018budget}. With MAPOLO, the common challenges in MARL are effectively addressed, such as input dimension explosion\footnote{The critic will receive the observations and actions from all agents as input in \citep{lowe2017multi}.}, reward credit assignment \citep{foerster2017counterfactual}, and non-stationary environment caused by agent policy changes \citep{tesauro2004extending}. %Besides, under some acceptable assumptions, we proved that the agents will converge to the optimum.
		
		To evaluate the effectiveness of our approach, we conducted experiments on large-scale real-world datasets. Compared with some of the state-of-the-art methods, we observed substantial improvements on both impression allocation results and MARL performance metrics. Our main contributions can be summarized as follows:
		
		\begin{enumerate}
			\item We study the problem of optimal impression allocation in display advertising with both guaranteed contracts and RTB. A novel and efficient allocation strategy is proposed and we prove that it is essentially optimal.
			\item We devise a MARL approach to achieve the optimum in the non-stationary and highly dynamic environment. To the best of our knowledge, this is the first work of applying MARL in impression allocation.
			\item Our MARL approach also addresses some common challenges in MARL including input dimension explosion, reward credit assignment and non-stationary environment caused by the change of agent policy.
		\end{enumerate}
	
	The rest of this paper is organized as follows. The optimal impression allocation strategy is derived in section 2. In section 3, we present our MARL approach to learn the optimal strategy and demonstrate its merits in dealing with some common MARL challenges. Experimental evaluation results are shown in section 4, followed by the related work in section 5. We conclude the paper in section 6. 
		
		%\subsection{Curriculum Learning}
		
	\section{Optimal Impression Allocation}
		
		%As argued in Section 1, an impression allocation strategy should have the following properties: 1) guaranteed contracts considered as bidders and submit bids for impressions, 2) impression quality is globally balanced through each bidder's bid, 3) the complex goal of revenue, contract fulfillment and contract quality is optimized. 
		%In this section, we first present the impression allocation process between guaranteed contracts and RTB, then we give the optimal bid function for guaranteed contracts to maximize the yield of impression allocation for a publisher.
		%
		%\subsection{Impression Allocation between RTB and Contracts}
		
		For the publishers who sell impressions through both guaranteed contracts and RTB, one of their impression allocation motivations is to maximize the total revenue from both contracts and RTB. Meanwhile, the impression quality\footnote{We abuse the concept \emph{impression quality} a little bit here so that it reflects both contract fulfillment and average impression quality.} of the contracts can affect the satisfaction of the contract advertisers and therefore affect the long-term revenue. The ultimate goal of impression allocation is to simultaneously maximize RTB revenue, contract revenue, and contract impression quality.% (Fig. \ref{fg:hy_method_framework}).

		\subsection{Problem Formulation}
		Suppose there are $n$ impressions indexed by $i$ to be allocated by the publisher. On the one hand, suppose that there are $m$ guaranteed contracts indexed by $j$ to be served. For each contract $j$, let $d_j$ be the demand impression volume and $c_j$ be the unit price of each impression so that the contract value is $c_jd_j$. Suppose the contract violation penalty of contract $j$ for each impression is $p_j$, that is, if the impressions served to contract $j$ is less than $d_j$, the publisher will have to be responsible for the penalty at $p_j$ each impression. On the other hand, for each impression $i$, RTB will also provide a list of bids of which we are mostly interested in the first and second highest bids $\mathbbm{b}_{i1}$ and $\mathbbm{b}_{i2}$ under the second-price auction mechanism. If the impression is allocated to RTB, then the publisher will earn $\mathbbm{b}_{i2}$. Let $x_{ij}$ be the binary indicator whether impression $i$ is allocated to contract $j$ and obviously if $\sum_j{x_{ij}=0}$ then the impression is allocated to RTB, we are able to derive revenues from both guaranteed contracts and RTB. More specifically, the revenue from guaranteed contracts is $R_{GC} = \Sigma_j c_jd_j - \Sigma_j p_jy_j$ where $y_j=d_j-\Sigma_i x_{ij}$ is the impression under-delivery amount and the revenue from RTB is $R_{RTB}=\Sigma_i (1-\Sigma_j x_{ij})\mathbbm{b}_{i2}$. Since we also care about the impression quality delivered to guaranteed contracts, let $q_{ij}$ be impression $i$'s quality for contract $j$ and $\lambda_j$ be the quality weight, the total contract impression quality can be denoted by $Q_{GC}=\Sigma_{ij} \lambda_jx_{ij}q_{ij}$. As mentioned above, our goal of impression allocation is to maximize both revenue $R_{GC} + R_{RTB}$ and quality $Q_{GC}$ for short-term and long-term benefits. By putting these objectives together, we define $yield$ as $R_{GC} + R_{RTB} + Q_{GC}$.  The optimal impression allocation problem can be formulated as follows\footnote{For simplicity, we take out the binary constraints of $x_{ij}$ in the formulation \eqref{lp1}. It can be proved that the influence of this simplification is negligible since in the optimal solution of \eqref{lp1}, the numbers of non-binary $x_{ij}^*$ are much fewer than that of binary $x_{ij}^*$.}:

		%$x_{ij}^*$ is not guaranteed to be integer, impression $i$ may be allocated to more than one contracts and make our allocation strategy not optimal. However, it can be proved that the numbers of non-integral $x_{ij}^*$ are much fewer than that of integral $x_{ij}^*$ therefore the influence of these non-integral $x_{ij}^*$ is negligible. 
		
		\begin{align}
		\small
		\underset{x_{ij},y_j}{\text{maximize}}  & & R_{GC} + R_{RTB} + Q_{GC} & & & \tag{LP1} \label{lp1}\\
		\text{s.t.} & & \Sigma_i x_{ij} + y_{j} \ge d_j,\quad &&&\forall j, \nonumber\\
		& & \Sigma_j x_{ij} \le 1,\quad &&& \forall i, \nonumber\\
		& & x_{ij} \ge 0,\quad &&& \forall i,j, \nonumber\\
		& & y_{j} \ge 0,\quad &&& \forall j. \nonumber
		\end{align}
		
		\begin{figure}
			\centering
			\includegraphics[width=0.20\textwidth]{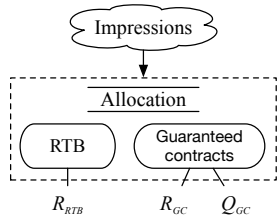}
			\caption{Impression allocation process of a publisher. Publishers aim to maximize the sum of $R_{RTB}$, $R_{GC}$ and $Q_{GC}$.}
			\label{fg:hy_method_framework}
		\end{figure}
		
		\subsection{The Optimal Allocation Strategy}
		
		Deriving the optimal solution to the problem is not straightforward. We propose to look at the problem from a different perspective. That is, we are interested in whether the guaranteed contracts can also participate in the real-time auctions so that they can also enjoy the liquidity and the impressions can be fully auctioned. More specifically, can each guaranteed contract be treated as a bidding agent which is able to submit bids for impressions and the impression allocation is based on the submitted bids from both guaranteed contracts and RTB? We will show that such a setup can actually lead us to the optimal allocation strategy. 
		
		\begin{table}
			\caption{Notations for impression allocation.}
			\centering
			\scalebox{0.64}{
				\begin{tabular}{c|l}
					\toprule
					Notations  &  Descriptions  \\  
					\hline
					$d_j$	& Impression demand of contract $j$.  \\
					$c_j$	& Payment of one impression for contract $j$.  \\
					$p_j$	& Contract violation penalty for one impression of contract $j$. \\
					$y_j$	& The shortfall of contract $j$.  \\
					$x_{ij}$	& Binary binary indicator whether impression $i$ is allocated to contract $j$.  \\
					$q_{ij}$	& Impression $i$'s quality for contract $j$.  \\
					$\lambda_j$	& The impression quality weight for contract $j$.  \\
					$b_i^*$		&  Maximal bid of all guaranteed contracts for impression $i$.  \\
					$\mathbbm{b}_{i1}$	& The highest bid of impression $i$ in RTB.  \\
					$\mathbbm{b}_{i2}$	& The second highest bid of impression $i$ in RTB.  \\
					$Q_{GC}$ 	& The impression quality of all guaranteed contract. \\
					$R_{GC}$	& Total revenue of guaranteed contracts.  \\
					$R_{RTB}$	& Total revenue of RTB.  \\
					\textit{yield} & $R_{GC} + R_{RTB} + Q_{GC}$ \\
					\bottomrule
				\end{tabular}
			}
			\label{table:notation1}
		\end{table}
		
		\begin{theorem} 
			Suppose for each impression $i$, every contract $j$ submits a bid $b_{ij}$. Let $b^*_i = \max_j b_{ij}$ and $j^*=\argmax_j b_{ij}$. Consider the allocation strategy that allocates impression $i$ to contract $j^*$ if $b^*_i  > \mathbbm{b}_{i2}$ and otherwise to RTB. The strategy solves the optimization problem defined in \eqref{lp1} if the bid from the guaranteed contract $j$ takes the form:
			\label{algo:prove}
		\end{theorem}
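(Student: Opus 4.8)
The plan is to prove optimality through linear-programming duality and complementary slackness, rather than by directly analysing the combinatorial allocation. First I would drop the additive constants $\sum_j c_j d_j$ and $\sum_i \mathbbm{b}_{i2}$ from the objective of \eqref{lp1}, so that it reduces to $\max \sum_{ij} x_{ij}(\lambda_j q_{ij} - \mathbbm{b}_{i2}) - \sum_j p_j y_j$. Introducing a dual variable $\alpha_j \ge 0$ for each demand constraint $\sum_i x_{ij} + y_j \ge d_j$ and $\beta_i \ge 0$ for each capacity constraint $\sum_j x_{ij} \le 1$, the dual program is $\min \sum_i \beta_i - \sum_j d_j \alpha_j$ (up to the same additive constant) subject to $\beta_i - \alpha_j \ge \lambda_j q_{ij} - \mathbbm{b}_{i2}$ for all $i,j$ and $\alpha_j \le p_j$ for all $j$. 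The crucial observation is that the first family of dual constraints rearranges to $\beta_i + \mathbbm{b}_{i2} \ge \lambda_j q_{ij} + \alpha_j$, so if the bid submitted by contract $j$ is exactly $b_{ij} = \lambda_j q_{ij} + \alpha_j$, then dual feasibility is equivalent to $\beta_i \ge \max(0,\, b_i^* - \mathbbm{b}_{i2})$.

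Next I would exhibit an explicit dual solution matched to the stated bids. Taking $\alpha_j$ equal to the coefficient that the bid formula attaches to the penalty $p_j$ (so that $b_{ij} = \lambda_j q_{ij} + \alpha_j$) and setting $\beta_i = \max(0,\, b_i^* - \mathbbm{b}_{i2})$ makes the dual feasible by construction, since $\alpha_j \le p_j$ and $\beta_i$ is the tightest value consistent with all the rearranged constraints. The primal candidate is the allocation from the theorem: $x_{ij^*} = 1$ exactly when $b_i^* > \mathbbm{b}_{i2}$, all other $x_{ij} = 0$, and $y_j = \max(0,\, d_j - \sum_i x_{ij})$. I would then verify the four complementary-slackness conditions one by one. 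Three of them are immediate: whenever $x_{ij} > 0$ we have $j = j^*$ and $\beta_i = b_i^* - \mathbbm{b}_{i2} = b_{ij} - \mathbbm{b}_{i2}$, which is precisely tightness of the $x$-constraint; whenever $\beta_i > 0$ the rule allocates impression $i$, so $\sum_j x_{ij} = 1$; and the $y_j$-slackness $\alpha_j = p_j$ holds because the bid already fixes $\alpha_j$ at the penalty level on any contract that can still be short.

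The step I expect to be the real obstacle is the complementary-slackness condition tied to the demand constraint, namely that $\alpha_j > 0$ forces $\sum_i x_{ij} + y_j = d_j$. This is automatic on every under- or exactly-delivered contract, but a naive constant bid $b_{ij} = \lambda_j q_{ij} + p_j$ would keep winning impressions even after $d_j$ is reached, producing slack $\sum_i x_{ij} > d_j$ while $\alpha_j = p_j > 0$ and thereby violating the condition; worse, such an over-delivered impression contributes only $\lambda_j q_{ij} - \mathbbm{b}_{i2}$, which can be negative. Handling this is exactly why the bid must drop the penalty term once the contract's demand is met, so that the effective shadow price $\alpha_j$ falls to $0$ on the slack demand constraints. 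I would therefore devote the bulk of the argument to showing that, under the precise bid form stated in the theorem, over-delivery with negative marginal value never occurs, so that the demand constraint is tight exactly where $\alpha_j > 0$. Once all four slackness conditions hold, the complementary-slackness theorem certifies that the proposed allocation is optimal for \eqref{lp1}.
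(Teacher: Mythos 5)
Your setup coincides with the paper's own: the same reduction of the objective to $\max \sum_{ij}x_{ij}(\lambda_j q_{ij}-\mathbbm{b}_{i2})-\sum_j p_j y_j$, the same dual program \eqref{lp2}, and your choice $\beta_i=\max(0,\,b_i^*-\mathbbm{b}_{i2})$ is exactly the tight dual completion consistent with the paper's constraint \eqref{lp2_1}. The genuine gap is in how you handle the two demand-side slackness conditions. You treat $\alpha_j$ as an arbitrary constant carried by the bid formula and then plan to \emph{prove} that $\alpha_j>0$ forces the demand constraint to be tight (and, glossed as ``immediate,'' that $y_j>0$ forces $\alpha_j=p_j$). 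Neither is a provable property of the allocation rule; both are constraints that pin down \emph{which} $\alpha_j$ make the theorem true. For a generic $\alpha_j\in[0,p_j]$ they simply fail --- your own over-delivery example shows this --- so the lemma to which you propose devoting the bulk of the argument (``under the stated bid form, over-delivery never occurs where $\alpha_j>0$'') is false as stated. Your escape hatch, a bid that ``drops the penalty term once the contract's demand is met,'' is no longer of the form $b_{ij}=\lambda_j q_{ij}+\alpha_j$ with a single constant $\alpha_j$, and in a static LP there is no ordering of impressions under which ``once demand is met'' is even meaningful; that device belongs to the paper's later MARL story, not to this theorem.

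The missing idea is to let strong duality choose the constants for you: take $(\alpha_j^*,\beta_i^*)$ to be an \emph{optimal} solution of \eqref{lp2} and set $\alpha_j:=\alpha_j^*$, which lies in $[0,p_j]$ by dual feasibility. This is the paper's route. Then one never has to verify the four complementary slackness conditions --- they hold automatically between the optimal primal $(x^*,y^*)$ and the optimal dual --- and the proof reduces to reading off, from precisely the two conditions you correctly identified as easy, that the optimal allocation agrees with the auction rule: if $b_{ij}<\mathbbm{b}_{i2}$ then $x_{ij}^*=0$ (the impression goes to RTB), while if $b_i^*>\mathbbm{b}_{i2}$ then $\beta_i^*>0$ by \eqref{lp2_1}, hence $\sum_j x_{ij}^*=1$, and any contract $k$ with $x_{ik}^*>0$ satisfies $b_{ik}=\mathbbm{b}_{i2}+\beta_i^*\ge b_{ij}$ for all $j$, i.e.\ it attains the maximal bid. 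The consistency facts you were fighting for (over-delivered contracts have $\alpha_j^*=0$, under-delivered ones have $\alpha_j^*=p_j$) are exactly what optimality of the dual hands you for free; they cannot be derived for bids with arbitrary $\alpha_j$, which also means the theorem must be read existentially (there \emph{exist} $\alpha_j\in[0,p_j]$, namely the optimal dual values, for which the strategy is optimal), not as a claim holding for every bid of the stated form.
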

		
		\begin{equation} \label{eq:optimal_f}
		b_{ij} = \lambda_j\ q_{ij} + \alpha_{j}
		\end{equation}
		
		\noindent
		, where $\alpha_{j} \in [0, p_j]$.

		\begin{proof}
			 The maximal revenue from RTB $\Sigma_i\mathbbm{b}_{i2}$, along with the total payment from contracts $\Sigma_jc_jd_j$ can be considered as constants. Then, $\text{maximize} \ \Sigma_{ij}(1-x_{ij})\mathbbm{b}_{i2} + \Sigma_jc_jd_j - \Sigma_j y_j p_j + \Sigma_{ij} \lambda_jx_{ij}q_{ij}$ can be simplified as $\text{maximize}\ -\Sigma_{ij}x_{ij}\mathbbm{b}_{i2} - \Sigma_j y_j p_j + \Sigma_{ij} \lambda_jx_{ij}q_{ij}$. Thus, the dual problem of \eqref{lp1} is as follows:
			
			\begin{align}
			\small
			\underset{\alpha_j,\beta_i}{\text{minimize}} & & \Sigma_{i}\beta_{i} -\Sigma_{j} \alpha_j d_{j} & & & \tag{LP2}\label{lp2} \\
			\textup{s.t.} &  & \lambda_j q_{ij} + \alpha_{j}  - \mathbbm{b}_{i2} \le \beta_i,  & & &  \forall i,j, \label{lp2_1} \\
			& & \alpha_j \le p_{j},  & & &  \forall j, \nonumber\\ 
			& & \alpha_j \ge 0,  & & & \forall j, \nonumber\\
			& & \beta_i \ge 0,  & & & \forall i. \nonumber
			\end{align}
			
			Suppose the optimal solution to \eqref{lp1} and \eqref{lp2} are $x^*_{ij}, y_j^*$ and $\beta^*_i, \alpha^*_j$ respectively. We denote impression $i$'s bid of contract $j$ as $b_{ij} = \lambda_j q_{ij} + \alpha_{j}^*$. According to the complementary slackness theorem,  we have
			
			\begin{align}
			\small
			x_{ij}^* \ (b_{ij} - \mathbbm{b}_{i2} - \beta_i^*) &= 0,     & & \forall i,j, \label{dual1}\\ 
			(\Sigma_j {x_{ij}^*} -1) \ \beta_i^* &= 0, & & \forall i. \label{dual2}
			\end{align}
			
			\noindent
			$\forall i=1,2,...,n; j=1,2,...,m$:
			\begin{itemize}
				\item If $b_{ij} < \mathbbm{b}_{i2}$ then $b_{ij} - \mathbbm{b}_{i2} - \beta_{i}^* < 0$. Based on Eq. \eqref{dual1} we can infer that $x_{ij}^* = 0$, which means impression $i$ is allocated to RTB.
				
				\item If $b_{ij} > \mathbbm{b}_{i2}$, we can infer that $\beta_i^* > 0$ and $\sum_j{x_{ij}^* = 1}$ according to Eqs. \eqref{lp2_1},\eqref{dual2}, which means impression $i$ is allocated to guaranteed contracts. Let $x_{ik}^* > 0$, then we have $b_{ik} = \mathbbm{b}_{i2} + \beta_{i}^*$ based on Eq. \eqref{dual1}. Therefore $b_{ij} \le \mathbbm{b}_{i2} + \beta_{i}^* = b_{ik}$ , i.e., $k = \argmax_j {b_{ij}}$. This means the impression is assigned to the contract with maximal bid.
			\end{itemize}
		\end{proof}
		
%		The optimal bid function for guaranteed contract is defined by Theorem \ref{algo:prove}.
%		TODO: 非整数的证明待验证。 s.t.是否需要解释？ 排版，表述。
		
\section{A Multi-Agent Reinforcement Learning Approach}
		
		As discussed in the last section, all guaranteed contracts are considered as bidders and their optimal bidding functions are given by Theorem \ref{eq:optimal_f}. However, since they submit bids in an unstable auction environment, as  argued in section 1, the bids may deviate from the optimal one during the real-time impression allocation process.
		Note that the bids submitted by contract bidders are essentially determined by the optimal impression allocation strategy, it is natural to employ the MARL approach to coordinate the adjustments of contract bidding functions close to the optimal ones as much as possible. In the rest of this section, we first introduce the MARL preliminaries, and then present our MARL approach MAPOLO. The frequently used notations for MARL in this paper are summarized in Table \ref{table:notation2}.
		
		\subsection{MARL Preliminaries}
		
		Reinforcement learning (RL) is a machine learning approach inspired by behaviorist psychology. In RL, an agent interacts with environment by sequentially taking actions, observing consequences, and altering its behaviors in order to maximize a cumulative reward. Multi-agent reinforcement learning (MARL) is an extension of RL and usually formalized in a Markov game framework \citep{littman1994markov}. For a Markov game with $m$ agents, it consists of: a state space $\mathcal{S}=\{s\}$, an collection of action spaces $\mathcal{A}_1, ..., \mathcal{A}_m$, state transition dynamics $\mathcal{T}: \mathcal{S}\times\mathcal{A}_1\times ... \times \mathcal{A}_m\rightarrow\mathcal{P}(\mathcal{S})$ where $\mathcal{P}(\mathcal{S})$ is the set of probability measures on $\mathcal{S}$, an immediate reward function $r_j: \mathcal{S}\times\mathcal{A}_1\times ... \times \mathcal{A}_m \rightarrow \mathbb{R}$ for agent $j$, and a discount factor $\gamma\in [0,1]$. A policy, denoted by $\pi_j: \mathcal{O}_j\rightarrow\mathcal{P}(\mathcal{A}_j)$, where $\mathcal{O}_j$ is the partial observation of state $\mathcal{S}$ for agent $j$ and $\mathcal{P}(\mathcal{A}_j)$ is the set of probability measures on $\mathcal{A}_j$. 
		Each agent uses its policy to interact with the environment and gives a trajectory of states, actions, and rewards $\{s_1, (a_{1}^{(1)}, ..., a_{m}^{(1)}), (r_{1}^{(1)}, ..., r_{m}^{(1)}), ..., s_t, (a_{1}^{(t)}, ..., a_{m}^{(t)}), (r_{1}^{(t)}\\, ..., r_{m}^{(t)})\}$, where $t \in [1, ..., T]$. For simplicity, we shorten the element of the trajectory at step $t$ as $(s_t, \textbf{a}_t, \textbf{r}_t)$. 
		%		The cumulative discounted reward constitutes the return $R_j=\sum_{t=1}^T\gamma^{t-1}r_{j}^{(t)}$ for each agent $j$. 
		%over $\mathcal{S}\times\mathcal{A}_1\times...\times\mathcal{A}_m\times \mathbb{R}$
		The objective of agent $j$ is to find a policy $\pi_j^*$ to maximize its expected sum of discounted rewards.
		
		\begin{equation} \label{eq:policy}
		\begin{aligned}
		\pi^*_j = \argmax_{\pi_j} \mathbb{E}[\Sigma_{t=1}^T\gamma^{t-1}r_{j}^{(t)}|\pi_j]
		\end{aligned}
		\end{equation}
		
		%\begin{equation} \label{eq:policy}
		%\begin{aligned}
		%\pi^*_i=\argmax_{\pi_i} \mathbb{E}[R_i|\pi_i]
		%\end{aligned}
		%\end{equation}
		
		\subsection{Modeling}
		
		Back to our problem, based on the optimal solution derived in last section, we consider each contract as an agent and all agents should cooperate with each other to regulate their $\alpha_j$ close to the optimal ones as much as possible based on the current state of the environment. 
		For agent $j \in [1, 2, ..., m]$, we consider an episodic Markov game with discount factor $\gamma=1$ where a deterministic episode (typically one day) starts with contract demand amount $d_j$, an initial bid parameter $\alpha_{j}^{(0)}$, and a contract violation penalty $p_j$ per impression. 
		As shown in Fig. \ref{fg:env_agent_interaction}, agents regulate their $\alpha_j$ sequentially with a fixed number of $T$ steps (typically 15 minutes between two consecutive steps) until the episode ends. At each time step $t \in [1, ...,T]$, agent $j$ gets observation $o_j^{(t)}$ based on state $s_t$ and takes an action $a_j^{(t)}$ to adjust $\alpha_j^{(t-1)}$ to $\alpha_j^{(t)}$, and then receives an immediate reward $r_j^{(t)}$ which is equaling the \textit{yield} of impressions between time-step $t$ and $t+1$. Agent $j$'s bid for any impression $i$ between time step $t$ and $t + 1$ is decided by the optimal bidding function \eqref{eq:optimal_f}, and the goal of each agent $j$ is to learn a $\alpha_j$ control policy to maximize the cumulative reward $R_j=\Sigma_{t=1}^T\gamma^{t-1}r_{j}^{(t)}$. More specifically, the core elements of the Markov game are further explained as follows:
		
		\begin{table}
			\caption{Notations for MARL modeling.}
			\centering
			\scalebox{0.6}{
				\begin{tabular}{c|l}
					\toprule
					Notations  &  Descriptions  \\  
					\hline
					$T$	& The episode length. \\
					$s_t$	& The state of environment at time-step $t$.  \\
					$o_{j}^{(t)}$ & The observation of agent $j$ at time-step $t$.  \\
					$a_{j}^{(t)}$ & The action of agent $j$ at time-step $t$.  \\
					$r_{j}^{(t)}$ & The reward of agent $j$ at time-step $t$.  \\
					$\alpha_{j}^{(t)}$	& The parameter in Eq. \eqref{eq:optimal_f} of contract $j$ at time-step $t$.  \\
					$Q_{j}^{(t)}$ & Action-value function of agent $j$ at time-step $t$, equaling $\sum_{i=t}^T\gamma^{i-t}r_{j}^{(t)}$\\
					$R_j$	& The return of agent $j$, equaling $\sum_{t=1}^T\gamma^{t-1}r_{j}^{(t)}$.  \\
					$\pi_j^{\mathbbm{r}}(o_j)$	&  The policy of agent $j$ with reward function $\mathbbm{r}$, mapping $o_j$ to $a_j$. \\

					\bottomrule
				\end{tabular}
			}
			\label{table:notation2}
		\end{table}
		
		\begin{itemize}[align=left,leftmargin=0.2in]
			\item[$\mathcal{S}$:] The state $s_t$ should in principle reflect the contract status and the RTB environment, which mainly includes the following three parts: firstly, the time information, which tells the agent the current stage of the impression allocation process; secondly, the contract demand fulfillment status, such as left impression volume to satisfy and demand fulfilling speed; last, the reward $r_t$, which represents the \textit{yield} at time-step $t$. Note that the features in state could be adjusted to adapt to the specific scenarios.
			
			\item[$\mathcal{A}_j$:] Recall that the goal of agent $j$ is to adjust the parameter $\alpha_j$ in Eq. \eqref{eq:optimal_f} close to the optimal one. We define the actions as limited range of real numbers, typically takes the form of $\alpha_j^{(t)} = \alpha_j^{(t-1)}(1+a_j^{(t)})$.
			
			\item[$r_j^{(t)}$:] The immediate reward for agent $j$ at step $t$ is the \textit{yield} of the impressions between time-step $t$ and $t+1$. Note that the immediate reward for each agent at step $t$ is same in a Markov game.
			
			\item[$\mathcal{T}$:] In our scenario, since the state space is too large to be feasible for a model-based approach, we adopt the commonly used model-free approach in our MARL algorithms. Therefore, the transition dynamics are not explicitly modeled.
			
			\item[$\gamma$:] The reward discount factor $\gamma=1$ since the optimization goal of the impression allocation problem is to maximize total reward value regardless of the reward time. 
		\end{itemize}
		
		There are common challenges for MARL to work well in real applications. One of the most concerned challenges is the convergence efficiency. Usually, a MARL process does not converge efficiently due to the following three factors:
		
		\begin{itemize}
			\item Inefficient reward function: Reward function is critical to the convergence efficiency of RL algorithms \citep{Sutton2017RL}, and the reward provided by environment may not directly related (such as linearly related) to the goal of maximizing return \citep{wu2018budget}, which makes agents have to resort to complicated exploration strategies to improve performance.
			%or even need to be further processed via reward credit assignment procedure in MARL \citep{foerster2017counterfactual}. 
			
			\item Non-stationary environment: From a single agent perspective, the environment would be non-stationary when the policies of other agents keep changing \citep{tesauro2004extending}, which results in fluctuating reward for a specific state-action pair. The non-stationary environment can significantly slow down the convergence process. 
			
			\item Input dimension explosion: The input dimension of MARL methods would explode when the number of agents increases \citep{lowe2017multi,foerster2017counterfactual}. As a result, the exploded dimension poses remarkable challenge to the convergence efficiency.
			
		\end{itemize}
		
		\begin{figure} 
			\centering
			\includegraphics[width=0.40\textwidth]{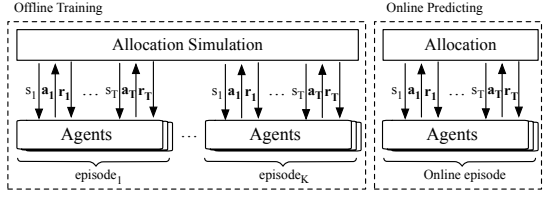}
			\caption{Illustration of MARL model training and predicting process. Each episode contains T steps and agents will be trained K episodes before being applied online.}
			\label{fg:env_agent_interaction}
		\end{figure}
		
		\subsubsection{MAPOLO}
		
		 We present the \emph{multi-agent policy optimization with local observations} (MAPOLO) approach to address all the challenges above. As shown in Fig. \ref{fg:MARLIA}, we adopt the multi-agent actor critic framework in \citep{lowe2017multi}, using a shaped reward function to coordinate agents' behaviors and allowing each agent only receive the local observations.
		
		The new reward function should be simple and direct, which means a better state-action pair (leading to a better return) should be given a larger reward. Meanwhile, we hope the reward for a specific state-action pair not be affected by other agents' behaviors unless they have achieved a better return, which will significantly relieve the challenge of non-stationary environment.
		Inspired by the work in \citep{wu2018budget}, we believe the return of an entire episode would be a feasible reward signal for all agents in MARL model. 
		The basic idea is that the larger the historical maximal return is, the larger reward should be provided to all the state-action pairs in the episodic trajectory, as shown in Eq. \eqref{eq:r_def}:
		
		\begin{equation} \label{eq:r_def}
		\begin{aligned}
		\mathbbm{r}(s, a_j^{(t)}) = \max_{e \in E(s, a_j^{(t)})} R_j^{(e)} \\
		\end{aligned}
		\end{equation}
		
		\noindent where $E(s, a_j^{(t)})$ represents the set of existing episodes that agent $j \in [1, 2, ..., m]$ took action $a_j^{(t)}$ at state $s$, and $R_j^{(e)}=\Sigma_{t=1}^T\gamma^{t-1}r_{j}^{(t)}$ is the original return for agent $j$ within episode $e$. The episodic nature of the process is leveraged so that new reward will be continuously updated during all policies' optimizations.
		
		\begin{figure} 
			\centering
			\includegraphics[width=0.25\textwidth]{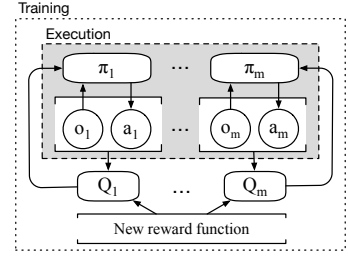}
			\caption{Illustration of MAPOLO. Agents are coordinated by the new reward function defined by \eqref{eq:r_def} during training and each agent only need to take local observations as input.}
			\label{fg:MARLIA}
		\end{figure}
		
		Another good property of the reward function is that the reward can be taken as coordinating information, indicating which action under the current state will benefit the final return.
		%Along the line, for agent $j$, we are particularly interested in whether it is still necessary to take other agents' information (observations and actions) as input, which is a common practice in existing MARL approaches \citep{lowe2017multi,foerster2017counterfactual}. 
		Therefore, we could just simplify the input of each agent as local observations to avoid the dimension explosion, which is a big challenge in existing MARL approaches\footnote{In the standard paradigm of centralized training with decentralized execution, the input of critic consists of all agents' observations.} \citep{lowe2017multi,foerster2017counterfactual}.
		Further, we prove that under some acceptable assumptions\footnote{The Markov game is deterministic and for each state there is only one optimal action. Here deterministic means taking actions $\textbf{a}_t$ at state $s_t$ will lead to $s_{t+1}$ with probability 1.}, the new reward defined in \eqref{eq:r_def} will lead to the optimal policy of the original Markov game.
		
		\begin{theorem} \label{prove2}
			
			Let $\pi^* = \{\pi^*_1,...,\pi^*_m\}$ be the optimal policy for the original Markov game,
			and $\pi^{\mathbbm{r}}=\{\pi^{\mathbbm{r}}_1,...,\pi^{\mathbbm{r}}_m\}$ be the optimal policy for the Markov game with new reward defined in Eq.\eqref{eq:r_def}. Then $\pi^{\mathbbm{r}} = \pi^*$ as long as all episodes start from the same initial state and for each state there is only one optimal action.
			
		\end{theorem}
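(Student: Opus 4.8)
The plan is to exploit the two standing hypotheses---determinism (from the footnote) and a unique optimal action per state---to show that the shaped reward $\mathbbm{r}$ is globally bounded by the optimal \emph{original} return and that this bound is attained exactly along the optimal trajectory. First I would record the structural consequences of determinism together with a common initial state $s_1$: every joint policy induces a single deterministic trajectory, and since the immediate reward is shared across agents (it is the per-step \emph{yield}), all agents share the same episode return $R^{(e)}$. Writing $R^{\ast}$ for the optimal original return and $\tau^{\ast}=(s_1,\mathbf{a}^{(1)},s_2,\dots)$ for the trajectory produced by $\pi^{\ast}$, the first key inequality is that for every reachable pair $(s,a_j)$ one has $\mathbbm{r}(s,a_j)\le R^{\ast}$, because by Eq.~\eqref{eq:r_def} $\mathbbm{r}$ is a maximum over genuine episode returns and no episode can beat $R^{\ast}$.

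Second, I would show the bound is tight precisely on $\tau^{\ast}$: each pair $(s_t,a_j^{(t)})$ lying on $\tau^{\ast}$ satisfies $\mathbbm{r}(s_t,a_j^{(t)})=R^{\ast}$, since $\tau^{\ast}$ is itself an episode through that pair attaining $R^{\ast}$ (here I invoke enough exploration that $E(s,a_j)$ contains the relevant trajectories, i.e. the shaped reward has converged to the true per-pair maximum). Summing over the horizon, the shaped return of any trajectory for agent $j$ is $\sum_{t=1}^{T}\mathbbm{r}(s_t,a_j^{(t)})\le T R^{\ast}$, with $\pi^{\ast}$ attaining the value $T R^{\ast}$. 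This yields one inclusion immediately: $\pi^{\ast}$ is optimal for the shaped game.

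Third, for the converse I would characterize the equality case. If $\pi^{\mathbbm{r}}$ is optimal for the shaped game, then because $T R^{\ast}$ is attainable (by $\pi^{\ast}$) and is also the cap, $\pi^{\mathbbm{r}}$ must achieve $\mathbbm{r}=R^{\ast}$ at \emph{every} visited pair; equivalently, every pair along its trajectory lies on some return-$R^{\ast}$ trajectory. I would then run an induction along the horizon: at $s_1$ the single-optimal-action hypothesis says the only action on a return-$R^{\ast}$ trajectory is the optimal one, so $\pi^{\mathbbm{r}}$ must play $\mathbf{a}^{(1)}$; determinism then forces the next state to be exactly $s_2$, and the argument repeats at $s_2,s_3,\dots$. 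This pins the shaped-optimal trajectory to $\tau^{\ast}$, hence $\pi^{\mathbbm{r}}=\pi^{\ast}$ on all reachable states.

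The hard part will be the equality-case induction together with its modeling subtleties. The upper bound $\mathbbm{r}\le R^{\ast}$ is immediate, but turning ``shaped-optimal'' into ``agrees with $\pi^{\ast}$ step by step'' requires (i) the exploration/convergence assumption that $\mathbbm{r}$ equals the true maximum over reachable trajectories rather than merely over episodes collected so far, and (ii) careful use of the unique-optimal-action hypothesis to exclude an alternative trajectory on which every pair nonetheless lies on some---possibly different---return-$R^{\ast}$ trajectory. Determinism is precisely what lets the induction propagate the shared initial state forward, so that ``unique optimal action at each state'' can be applied coordinate by coordinate to recover the entire joint action and thereby the full trajectory.
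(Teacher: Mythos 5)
Your proposal is correct (modulo the idealization you flag yourself), but it takes a genuinely different route from the paper. The paper's proof is much shorter and largely delegated: it characterizes $\pi^*_j(o_j^{(t)})$ as the argmax of the tail return, then \emph{cites the theorem of} \citep{wu2018budget} to assert that, under determinism and a common initial state, the shaped-reward-optimal action $\pi_j^{\mathbbm{r}}(o_j^{(t)})$ also maximizes each agent's original return, and finally invokes the unique-optimal-action hypothesis to identify the two policies state by state. What you have done is essentially in-line a proof of that black-boxed step: the bound $\mathbbm{r}(s,a_j)\le R^*$ (since \eqref{eq:r_def} is a maximum over genuine episode returns), tightness of the bound along the optimal trajectory $\tau^*$, the conclusion that the shaped value is capped at $TR^*$ and attained by $\pi^*$, and then the equality-case induction in which uniqueness of the optimal action plus determinism force any shaped-optimal trajectory to coincide with $\tau^*$ step by step. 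Your route buys two things the paper's does not: it is self-contained (no reliance on an external single-agent theorem whose multi-agent applicability is simply asserted), and it surfaces the hidden hypotheses --- that the shaped reward must have converged to the true per-pair maximum over trajectories (sufficient exploration), and that the shared per-step \emph{yield} makes all agents' returns identical so the per-agent argument lifts to the joint action. One small point worth making explicit in your induction: an episode attaining return $R^*$ necessarily takes an optimal action at \emph{every} state it visits (otherwise splicing in the optimal tail would exceed $R^*$); this is the lemma that licenses your appeal to the unique-optimal-action hypothesis at each step, and it uses the fact that states in this model encode the time step, so ``optimal action at a state'' is well defined for the finite-horizon game.
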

		
		\begin{proof}

			According to Eq. \eqref{eq:policy} and the Markov property, the optimal policy at any state $s_t$ with observation $\{o_j^{(t)}|j = 1,...,m\}$ is as follows,
			\begin{equation}
			\small
			\begin{aligned}
			\pi^*_j & = \argmax_{\pi_j} [\Sigma_{i=1}^T \gamma^{-1}r_j^{(i)}|\pi_j], \forall j=1,...,m, \\
			\Rightarrow \pi^*_j(o_j^{(t)}) & = \argmax_{\pi_j} [\Sigma_{i=t}^T{\gamma^{i-t} r_j^{(i)}}|\pi_j], \forall j=1,...,m,
			\end{aligned}  
			\end{equation}
			which means that $\pi^*_j(o_j^{(t)})$ is the  optimal action that maximize the return of agent $j$. 
			
			Since the Markov game is deterministic and starts from the same initial state, according to the theorem proven in \citep{wu2018budget}, the action $\pi_j^{\mathbbm{r}}{(o_j^{(t)})}$ of each agent $j$ is also the optimal one that maximize the return for agent $j$, i.e.,
			\begin{equation}
			\small
			\pi_j^{\mathbbm{r}}{(o_j^{(t)})} = \argmax_{\pi_j} [\Sigma_{i=t}^T{\gamma^{i-t} r_j^{(i)}}|\pi_j], \forall j = 1,...,m.
			\end{equation}
			This means $\{\pi_1^{\mathbbm{r}}{(o_1^{(t)})},...,\pi_m^{\mathbbm{r}}{(o_m^{(t)})}\}$ is an optimal action for the multi-agent Markov game. Since the optimal action is unique, we have $\pi_j^{\mathbbm{r}}{(o_j^{(t)})} = \pi_j^*{(o_j^{(t)})}, \forall j=1,...,m, \forall t=1,...,T$. Therefore $\pi^{\mathbbm{r}} = \pi^*$.
			
		\end{proof}
		
		\section{Experimental Evaluation}
		
		%		In this section, we first present the experimental setup. Then we compared our approach with the widely used impression allocation strategies of tackling contract shortfall in non-stationary environment. Last, we investigated the properties of our approach towards the common challenges in MARL.
		
		We evaluate our proposed method from two aspects. First, we compare our approach with existing approaches and show its advantages in the optimal impression allocation tasks. Second, we demonstrate the desirable properties of our approach in dealing with the common challenges in MARL.
		
		\subsection{Experiment Setup}
		\subsubsection{Dataset}
		
		%The dataset is from a leading e-commerce advertising platform. We extract the real ad serving logs of two consecutive days in August, 2018, containing more than 35 millions impressions. Since different categories of advertisers usually compete for different segment of users by specifying targeted audience, we divide the dataset into five subsets based on the advertiser categories (apparel, food, electronics, cosmetics, and home decoration) to facilitate better observations. Each subset consists of the eligible ad impressions, guaranteed contracts, and detailed RTB information\footnote{Due to confidential issues, the actual values related to revenue are transformed into meaningless numbers.} of the corresponding category of advertisers. From each category of advertisers' perspective, the environment such as the impression volume and market price distribution in RTB changed significantly in these two days. The details of this dataset are shown in Table \ref{table:yield_cmp}. We use the data from the first day for training and those from the second day for testing.
		
		The experiment datasets are from a leading e-commerce advertising platform. We picked up five different publishers on this platform and for each publisher we extracted the real ad serving logs of two consecutive days in August, 2018. 
		The five datasets contain more than 35 millions impressions in total. Each dataset consists of the ad impressions, the guaranteed contracts, and the detailed RTB auction information\footnote{Due to confidential issues, the actual values related to revenue are transformed into meaningless numbers.}. 
        From each publisher's perspective, the environment such as the impression volume and RTB market price distribution changed significantly in these two days. The details of these datasets are shown in Table \ref{table:yield_cmp}. We use the data from the first day for training and those from the second day for testing.
		
		%We use data from day $i$ for training and day $i + 1$ for testing, $i \in \{1,2\}$, and the results are averaged by default. 
		
		\subsubsection{Evaluation Metrics}
		
		The goal of the optimal impression allocation task is to maximize the \textit{yield}. Based on the optimal bidding equation defined in \eqref{eq:optimal_f}, the theoretically best \textit{yield} on the testing dataset can be obtained, denoted by $R^*$. Let $R$ be the actual \textit{yield} of the applied policy. 
		The difference between $R$ and $R^*$, i.e. $R/R^*$, is a simple and effective metric to evaluate the policy. 
			For MARL algorithms, the convergence efficiency are also critical for practical effectiveness. It can be measured by the time consumed for converging to optimum and the time consumed for training per episode.

		\subsubsection{Comparing Methods} 
		
		\begin{enumerate} 
			\item \textbf{Contract First (CF):} A common impression allocation strategy that combines a contract bidding function and a potential contract shortfall tackling strategy. Based on the optimal bidding function defined by \eqref{eq:optimal_f}, when a contract shortfall risk is detected \citep{balseiro2014yield}, all remaining impressions are allocated to the contracts.
			
			\item \textbf{PID Controller (PID):} A widely used technique in display advertising \citep{zhang2016feedback} to fulfill contracts by even pacing. We adopt this technique to regulate $\alpha_j$ in Eq. \eqref{eq:optimal_f} to satisfy each contract $j$'s demand.
			
			\item \textbf{MADDPG:} Based on the optimal bidding function defined by \eqref{eq:optimal_f}, we implement the state-of-the-art MARL method \citep{lowe2017multi} to coordinate the agents with immediate reward function $r_j^{(t)}$ for impression allocation. 
			
			\item \textbf{MAPOLO:} Based on the optimal bidding function defined by \eqref{eq:optimal_f}, we implement our multi-agent policy optimization method for impression allocation, in which agents only receive local observations and the new reward function defined by \eqref{eq:r_def} is adopted. 
		\end{enumerate}
		
		\subsubsection{Implementation Details}
		
		We take a fully connected neural network with 3 hidden layers and 64 nodes for both actor and critic in each agent and another identically structured neural network as the new reward function \eqref{eq:r_def}. The mini-batch size is set to 32 and the replay memory size is set to 100,000. 
		The action range is limited to $[-0.1, 0.1]$ and the action noise is implemented by a normal distribution generator with mean $0$ and standard deviation $0.05$. Following the common practice of DDPG \citep{lillicrap2015continuous}, we set $\tau = 0.02$ to update target network parameters with that of actor and critic network, and the learning rate of actor and critic is set to $\num{1E-3}$ and $\num{1E-4}$ respectively.
		We also conducted experiments with different parameters. Larger action range and standard deviation of the noise would deteriorate the convergence efficiency due to larger exploration space, while tuning other parameters usually leads to similar results.

		\subsection{Evaluation Results}
		
		%		\begin{table}
		%			\caption{Dataset description.}
		%			\centering
		%			\scalebox{0.9}{
		%				\begin{tabular}{c|cccc}
		%					\toprule
		%					Category ID & Contracts & IMPs & Demand & Average $\alpha$ \\ 
		%					\hline
		%					1  &  68  &  0.436  &  0.525 & 0.50  \\
		%					2  &  7   &  0.436  &  0.525 & 1.04  \\
		%					3  &  25  &  0.436  &  0.525 & 0.12  \\
		%					4  &  53  &  0.436  &  0.525 & 0.25  \\
		%					5  &  5   &  0.436  &  0.525 & 2.50  \\
		%					%\midrule
		%					\bottomrule
		%%					\hline
		%%					Average  &  &  0.526  &  0.807  & 0.791 \\
		%%					\bottomrule
		%			\end{tabular}}
		%			\label{table:1}
		%		\end{table}
		
		We conduct experiments to compare the performance of CF, PID, MADDPG and MAPOLO on all the five datasets. The initial parameter $\alpha_j^{(0)}$ is set to be the optimal one of training data. 
		As argued in section 1, due to the instability of auction environment, the impression volume and market price
		in the testing data deviate from that of training data.
		To present the performances of different approaches under the unstable environment, the instability statistics and experimental results based on testing dataset are summarized in Table \ref{table:yield_cmp}. 
		We can see that MAPOLO outperforms CF and PID in all datasets, and the overall improvement over CF and PID is $\num{5.7}$\% and $\num{4.5}$\% respectively. 
		It is worth noting that the result of MADDPG is not listed due to the convergence efficiency problem, which will be discussed in next subsection.
		%When the impression volume and CPM difference between training and testing dataset are small, such as that in category 2, all methods achieve satisfying results. 
		%However, when the environment changes are non-negligible, such as in category 1 and 4, the performance of other methods degrade enormously while MAPOLO delivers a robust performance. 
		
		\begin{table}
			\caption{The datasets statistics and $R/R^*$ of CF, PID and MAPOLO in 5 testing datasets.}
			\centering
			\scalebox{0.62}{
			\setlength\tabcolsep{3pt}
				\begin{tabular}{ccrrrrrrc}
					\toprule
					%\hline
					
					Publisher & Contracts & Impressions & \tabincell{c}{Contract\\Demands} & \tabincell{c}{Impression\\Difference} & \tabincell{c}{Market Price\\Difference} & CF &  PID  &   MAPOLO \\
					%\midrule
					\hline
					1  & 68	& 4.9M 	& 2.7M 	& -6.4\% 	& -25.4\% 	& 0.92       &  	0.90	&  \textbf{0.93} \\
					2  & 7	& 2.9M 	& 1.3M 	& 4.8\% 	& 	38.7\%	& 	0.86	&  	0.89	&  \textbf{0.92}	 \\
					3  & 25	& 2.8M 	& 2.0M 	& -15.5\% 	& 52.5\% 	& 0.79  &  0.83	& 	\textbf{0.87} \\
					4  & 53	& 5.6M 	& 3.1M 	& 25.6\% 	& -30.9\% 	&  0.88	&  0.89 &  \textbf{0.96}	\\
					5  & 5	&  149K & 95K 	& -20.6\% 	&   16.9\%	& 0.89 &  0.91    & \textbf{0.92} \\
					%\midrule
					\hline
					Average & & & & & & 0.87 & 0.88 & \textbf{0.92} \\
					\bottomrule
			\end{tabular}}
			\label{table:yield_cmp}
		\end{table}
		
		\begin{table}
			\caption{Detailed results on datasets of publisher 3 and 4.}
			\centering
			\scalebox{0.7}{
				\begin{tabular}{c|lrrrr}
					\toprule
					Publisher & Method & $R_{GC}$ & $R_{RTB}$ & $Q_{GC}$  & \textit{yield} \\ 
					\hline
					\multirow{4}{*}{3} & Optimal  &  6701.31   &  11796.01 & 2694.98 & 21192.30 \\
					& CF  		&  6771.85 &  6581.27  & 3406.18 & 16759.30\\
					& PID  		&  6771.85 &  8796.74  & 1925.04 & 17493.63\\
					& MAPOLO  	&  2999.01 &  13671.47 & 1734.29 & 18404.77\\
					\hline
					\multirow{4}{*}{4} & Optimal  &  5473.19  &   10609.10 & 1243.20 & 17325.71 \\
					& CF  		& 6037.51  &  7904.39 & 1251.33 & 15193.23\\
					& PID  		& 5950.39  & 8108.30  	& 1345.32 & 15404.01\\
					& MAPOLO  	& 5473.43  & 10185.15 & 1026.07 & 16684.65\\
					\bottomrule
			\end{tabular}}
			\label{table:ic_case}
		\end{table}
		
		To further investigate these methods' behaviors as the environment enormously changes, we present the detailed information of \textit{yield} for the representative datasets of publisher 3 and 4 in Table \ref{table:ic_case}, including $R_{GC}$, $R_{RTB}$ and $Q_{GC}$. Although CF achieves good $Q_{GC}$ in both datasets, it may result in poor \textit{yield} because of its belated adaptation to the environment change. Specifically, on dataset of publisher 3, when the competition become fiercer, CF still bids with the $\alpha$ obtained from yesterday, which makes the bid too low to win sufficient impressions at the beginning of the day. In order to satisfy the contract demands, all impressions, including those with high RTB revenue, are allocated to contracts when a shortfall risk is detected, therefore results in poor results of $R_{RTB}$. Similarly, on dataset for publisher 4, it bids with a relatively higher $\alpha$ and wins impressions with higher price, which results in poor $R_{RTB}$. 
		
		Compared with CF, PID shows the ability to cope with environment changes and obtains higher \textit{yield} on most datasets. PID adjusts $\alpha$ to pace the impression acquisition. MAPOLO is even better than PID due to the following two reasons. 
		First, it is critical for a PID strategy to be provided a proper target, which is the optimal impression volume to be allocated at each step. In practice, it is tricky and needs expert knowledge to be continuously optimized.
		Second, recall that the bids of contract bidders are determined by the allocation strategy and they are globally optimized to maximize the joint objective. .
		MAPOLO gives a proper solution to all these problems, and coordinates all contract bidders' behaviors to deliver satisfying results in an unstable environment.
		
		\subsection{Convergence Efficiency and Scalability}

		Convergence efficiency and scalability is an important factor that affects the practical value of MARL algorithms. Specifically, for impression allocation, it is of great necessity to accomplish the training process in an acceptable period of time before real application. In this section, we present experimental evaluations to investigate convergence efficiency and scalability of MADDPG and MAPOLO. 
		We compare both methods on two datasets: publisher 1 with 68 agents and publisher 3 with 25 agents, and the results are illustrated in Fig. \ref{fg:converging_process}.

		We can see that MAPOLO converges much faster than MADDPG and achieves satisfying $R/R^*$ of $\num{0.9}$ in both datasets. The reason why MAPOLO outperforms MADDPG can be interpreted from two aspects: first, the new reward function defined by Eq. \eqref{eq:r_def} is more "direct" and relieves the non-stationary environment problem; second, for MAPOLO, the input dimensionality for a single agent is not affected by the increasing agent number, while MADDPG is significantly suffered from that. When the agent number increases $\num{2.7}$ times ($\num{25}$ to $\num{68}$), the time consumption per episode for MADDPG increases $\num{4.7}$ times while MAPOLO only linearly increases $\num{2.5}$ times. This scalability advantage makes MAPOLO be applicable in scenario with more agents, especially in those large-scale industrial applications.
		
		\begin{figure} 
			\centering
			\includegraphics[width=0.48\textwidth]{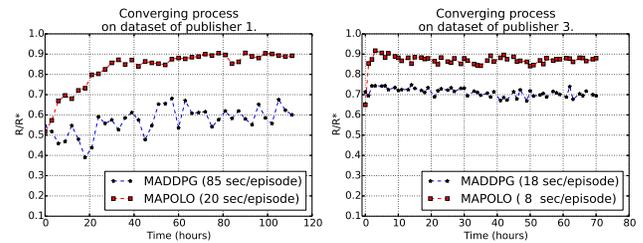}
			\caption{Converging processes of MADDPG and MAPOLO on dataset of publisher 1 and publisher 3. The x-axis is the elapsed time and the y-axis is the $R/R^*$. Average time consumption (in seconds) per episode of both methods are attached as the metric of convergence efficiency.
			}
			\label{fg:converging_process}
		\end{figure}
		
		\section{Related Work}
		
		Impression allocation, especially allocating between guaranteed contracts and RTB, is one of the most important issues to monetize traffic for a publisher. 
		\citep{ghosh2009bidding} was the first work considering contracts as bidders to compete with RTB, but the goal was to maximize the representativeness of contract impressions from the advertisers' perspective.
		\citep{chen2014dynamic} proposed a revenue maximization strategy based on allocating and pricing the future contract impressions. However, the allocation was determined in advance rather than through real-time auctions.
		\cite{balseiro2014yield} proposed learning a stochastic policy to maximize the \textit{yield} of a publisher based on the \textit{contract first} strategy, which has been proved not optimal in our experimental evaluations. 
		Recently, a new strategy maximizing the total revenue was proposed in \citep{jauvion2018optimal}. However, the challenge of environment instability was not discussed or addressed.
		
		Reinforcement learning (RL) has been widely studied in computational advertising, especially in bidding strategy optimization \citep{cai2017real, amin2012budget}. In MARL, methods based on single agent settings usually deliver poor performance \citep{matignon2012independent} due to the non-stationary environment \citep{colby2015counterfactual}. Although some Nash equilibrium algorithms are proposed to solve this problem, it is hard to be applied in real-world applications due to the computational complexity \citep{de2012polynomial}. \citep{yang2018mean} tries to address this challenge via leveraging action information from neighbor agents, but the concept of neighborhood is hard to be defined in the online display advertising context. Our approach follows the standard paradigm of centralized training with decentralized execution, which does not need to define any communication channels \citep{foerster2016learning, mordatch2017emergence}. \citep{lowe2017multi,foerster2017counterfactual} are similar works and both are applicable to our problem. We adopt \citep{lowe2017multi} for its simplicity and avoid the reward credit assignment problem in \citep{foerster2017counterfactual}.

		\section{Conclusion}
		
		In this paper, we proposed a MARL approach to maximize the total \textit{yield} of a publisher by allocating impressions between guaranteed contracts and RTB. We derived the optimal impression allocation strategy by solving the optimal bidding function when contracts are treated as bidders. In order to implement the strategy with the practical challenges such as environment instability, we propose an efficient MARL method, MAPOLO, to coordinate agents' behaviors in real-time. MAPOLO also solved the common challenges in MARL and achieved good convergence efficiency and scalability compared with the state-of-the-art MARL methods.

\end{CJK}
	
	\Urlmuskip=0mu plus 1mu\relax
	\bibliographystyle{aaai}
	\bibliography{bibliography}
\end{document}